\newtheorem{example}{Example}
\newtheorem{definition}{Definition}
\newtheorem{proposition}{Proposition}
\def\AAF{AAF}
\def\Args{Args}
\def\Att{Att}
\def\arg{\alpha}
\def\applies{\Vdash}
\def\PS{PS}
\def\Worlds{\mathcal{W}}
\def\GE{G}
\def\XE{G}
\newcommand{\abaf}{\ensuremath{\langle {\cal L}, \, {\cal R}, \, {\cal A},\, \overline{ \vrule height 5pt depth 3.5pt width 0pt \hskip0.5em\kern0.4em}\rangle}}
\def\contrary{\overline{ \vrule height 5pt depth 3.5pt width 0pt \hskip0.5em\kern0.4em}}
\newcommand{\lang}{\ensuremath{\mathcal{L}}}
\newcommand{\asms}{\ensuremath{\mathcal{A}}}
\newcommand{\rules}{\ensuremath{\mathcal{R}}}
\newcommand{\facts}{\ensuremath{\mathcal{F}}}
\newcommand{\some}{\ensuremath{\chi}}
\newcommand{\argur}[3]{#1 \vdash_{#3} #2} 
\newcommand{\ruleset}{\ensuremath{\mathcal{S}}}
\newcommand{\sent}{\ensuremath{s}}
\newcommand{\asm}{\ensuremath{a}}
\newcommand{\asmset}{\ensuremath{A}}
\newcommand{\abafT}{\ensuremath{\langle {\cal L}^T, \, \mathcal{R}^T, \, \mathcal{A}^{T},\, \overline{ \vrule height 5pt depth 3.5pt width 0pt \hskip0.5em\kern0.4em}^{T}\rangle}}
\title{Understanding ProbLog as Probabilistic Argumentation}
\author{Francesca Toni \qquad Nico Potyka
\institute{Department of Computing\\ Imperial College London, UK}
\email{\{ft, n.potyka\}@ic.ac.uk}
\and
Markus Ulbricht  
\institute{Department of Computer Science \\
Leipzig University, Germany}
\email{\quad mulbricht@informatik.uni-leipzig.de}
\and 
Pietro Totis
\institute{Department of Computer Science\\ KU Leuven, Belgium}
\email{\quad pietro.totis@kuleuven.be}
}
\begin{document}
\maketitle

\begin{abstract}
ProbLog is a popular probabilistic logic programming 
language/tool, widely used for applications requiring to deal with inherent uncertainties in structured domains. In this paper we study 
connections between ProbLog and a variant of another well-known formalism combining symbolic reasoning and reasoning under uncertainty, 
i.e. probabilistic argumentation. Specifically, we show that ProbLog is an instance of a form of Probabilistic Abstract Argumentation (PAA) 
that builds upon  Assumption-Based Argumentation (ABA)
.    
The connections pave the way towards equipping ProbLog with 
alternative semantics, inherited from PAA/PABA, as well as obtaining novel argumentation semantics for PAA/PABA, leveraging on 
prior
connections between ProbLog and argumentation. 
Further, the connections pave the way towards novel forms of argumentative
explanations for ProbLog's outputs.
\end{abstract}

\section{Introduction}

ProbLog \cite{ProbLog,ProbLog15} is a popular probabilistic logic programming formalism, 
equipped with efficient tools in support of applications\footnote{See \url{https://dtai.cs.kuleuven.be/problog/}.}
In a nutshell, ProbLog programs amount to logic programs where facts may be equipped with probabilities. 
Thus, ProbLog can be naturally
used for applications requiring to deal with inherent uncertainties, e.g. in neuro-symbolic settings to learn how to operate on images~\cite{deepProbLog18}, object tracking~\cite{ObjTraciking}, modelling metabolic networks\cite{MetaNetworks}, or synthesising inductive data models~\cite{synth}.

As a knowledge representation and reasoning (KRR) formalism, ProbLog can be seen as bringing together symbolic reasoning
and reasoning under uncertainty. Thus, it makes sense to wonder how it relates to the family of KRR formalisms broadly referred to as \emph{Probabilistic Argumentation} (PA) \cite{hunter2021probabilistic}, which also bring together these two types of reasoning. Indeed, some existing works already study the connections between the two, in particular 
in the presence of inherent uncertainties, e.g. in neuro-symbolic settings 
with images \cite{deepProbLog18}.
Also, \textsc{sm}ProbLog~\cite{SMProbLog}
extends ProbLog beyond the 
standard well-founded model semantics~\cite{WFM} towards the stable model semantics~\cite{SMS} so as to capture a form of 
 PA. 
Further, 
\cite{Bistarelli20}  shows how the form of 
PA 
in \cite{nir}
can be implemented in ProbLog.
These existing works thus focus on  
showing how (variants of) Problog can 
obtain (forms of) 
PA. 
In this paper, instead, we focus on the opposite direction, specifically on whether 
(some forms of) 
PA can be used to obtain 
(variants of) ProbLog
.
In summary, our contribution is two-fold:
\begin{itemize}
\item We define a new instance of 
the well-known Assumption-based Argumentation (ABA)~\cite{ABA,ABAhandbook} 
and  use it to instantiate Probabilistic Abstract Argumentation (PAA) \cite{PAA-PABA} 
(Section~\ref{sec:newABA}); 
\item We 
formally relate 
ProbLog and the proposed instance of PAA  (Section~\ref{sec:results}).
\end{itemize}
 This reinterpretation of ProbLog in argumentation terms opens new avenues (as discussed in Section~\ref{sec:concl})
 .  

\section{Background}

\paragraph{Logic Programs.} A \emph{logic program} (LP) is a set of (implicitly universally quantified) rules of the form
$ l_0 \leftarrow l_1, \ldots, l_m$, 
with $m \geq 0$, $l_0$ an atom (called the \emph{head} of the rule) and each $l_i$ (for $1 \leq i\leq m$) an atom or the \emph{negation-as-failure} $not \, a_i$ of an atom $a_i$.
If $m=0$ then the rule is called a \emph{fact}.
The \emph{Herbrand base} of an LP is the set of all ground atoms obtained from the predicate
, function/constant symbols in the LP.  

\paragraph{ProbLog.}
The material in this section is adapted from \cite{ProbLog,ProbLog15}.
A \emph{ProbLog program}
is a set $R$ of rules  as in an LP together with a set $F$ of \emph{probabilistic facts} of the form 
$ p:: l_0 \leftarrow$, where $ p \in [0,1]$ and $ l_0$ is an atom  that is  not the head of any rule in $R$.
The semantics of a ProbLog program is based on distribution semantics~\cite{distribution}:
a ProbLog program $T= F\cup R$ defines a probability distribution over 
LPs $L = F'\cup R, F'\subseteq \{l\leftarrow | p::l\leftarrow \in  F\}$:
$$ P(L|T)= \prod_{l
\leftarrow \in F', p
::l
\leftarrow \in F} p
\cdot \prod_{l
\leftarrow 
\not\in F', p
::l
\leftarrow \in F} (1-p
)$$
Then, for a \emph{query} $q$
,  the \emph{success probability} for $q$, given $T$, is\footnote{A query $q$ is a possibly non-ground atom from the Herbrand base of $T$ and $\theta$ denotes a possibly empty variable substitution. }

$$P_s(q|T) = \sum_{L=F'\cup R,  
F'\subseteq \{l\leftarrow | p::l\leftarrow \in  F\},\exists \theta :L\models q\theta} P(L|T)$$
for $\models$ the chosen logic programming semantics: if $L
$ is a \emph{positive LP} (without negation as failure), as in \cite{distribution}, then this is the least Herbrand model
; otherwise, let it be the well-founded model
~\cite{WFM}.

\begin{example}
\label{ex:ProbLog}
Consider the (propositional
) LP amounting to the following rules: 

\hspace*{3.5cm} $a\leftarrow b,not \, c;
\quad \quad
b\leftarrow; \quad \quad
d \leftarrow not \,d$
\\
(referred to in short as $\rho_1, \rho_2, \rho_3$, respectively).
Then, for $R=\{\rho_1, \rho_3\}$ and
$F=\{0.3:: b\leftarrow\}$, $T=F\cup R$ is a ProbLog program.
Given that $R\not \models a$ (as the  well-founded model of $R$ deems $a,b,c$ false and $d$ undecided)
and
$R\cup\{\rho_2\} \models a$ (as the  well-founded model of $R\cup\{\rho_2\}$ deems $a,b$ true, $c$ false and $d$ undecided),
$P_s(a | T)=P(R\cup\{\rho_2\} |T)=0.3$.
\end{example}
ProbLog typically assumes that
the well-founded model of every LP 
$L \!\subseteq \!F' \!\cup \!R, F'\!\subseteq \! \{l\leftarrow \!| p::l\leftarrow \!\in\!  F\}$  is 
two-valued~\cite{ProbLog,SMProbLog}, but 
the well-founded model may be three-valued~\cite{prism3val}
, as in the example 
.

\paragraph{Abstract Argumentation.}
An Abstract Argumentation (AA) framework~\cite{Dung95} is a pair $ (\Args, \Att)$  with $\Args$ the \emph{arguments} and $\Att \!\subseteq \!\Args\!\times\!\Args$ the \emph{attack} relation. AA frameworks can be equipped with various argumentative semantics~\cite{Dung95}, e.g. the \emph{grounded} or \emph{stable extensions} semantics~\cite{Dung95}, identifying sets of ``acceptable arguments''. For illustration,
given $\AAF=  (\Args, \Att)$ with $\Args=\{\alpha, \beta\}$ and $\Att=\{(\beta,\beta)\}$, the grounded extension of $\AAF$ is $\{\alpha\}$ and there is no stable extension of $\AAF$.

\paragraph{Assumption-based argumentation (ABA).} 

An
{\em ABA framework} (as originally proposed in \cite{ABA}, but presented here following more recent accounts by \cite{
ABAtutorial,
ABAhandbook}) is a tuple \abaf{}
where

\begin{itemize}
\item 
$\langle \lang, \rules\rangle$ is a deductive system,
 where $\lang$ is a \emph{language} and $\rules$ is a set of
 \emph{(inference) rules} of the form $\sent_0 \leftarrow \sent_1,\ldots, \sent_m $ ($m \ge 0, \sent_i \in \lang$, for $1\leq i \leq m$); 

\item 
$\asms$ $\subseteq$ $\lang$ is a (non-empty) set
of {\em assumptions}; 

\item 
$\contrary$ is a total mapping from $\asms$ into
 $\lang$, where $\overline{\asm}$ is the {\em contrary} of $\asm$, for $\asm
  \in \asms$.
\end{itemize}
Given a rule $\sent_0 \gets \sent_1, \ldots,
\sent_m$, $\sent_0$ is referred to as the {\em head} 
and $\sent_1,\ldots, \sent_m$ as the {\em body}; 
if $m=0$ then the  body is said to be {\em empty}.
{\em Flat ABA frameworks} are restricted so that assumptions are not heads 
of rules. 

Differently from AA, where arguments and attacks are given, in flat ABA they are derived from the building blocks of ABA frameworks. Specifically,  {\em arguments} are deductions of claims using rules and 
supported by assumptions, and {\em attacks} are directed at the
assumptions in the support of arguments: 
\begin{itemize}
\item 
\emph{an argument for (the claim) $\sent \in \lang$ 
supported by $\asmset \subseteq \asms$ and $\ruleset \subseteq \rules$
}
(denoted $\argur{\asmset}{\sent}{\ruleset}$
) is a finite tree with nodes labelled by
sentences in $\lang$ or by $true$\footnote{Here $true$ stands for the empty body of rules.}, the root labelled by $\sent$, leaves either $true$ or
assumptions in $\asmset$, and non-leaves $\sent'$ with, as children,
the elements of the body of one rule in \ruleset{} 
with head $\sent'$ ($true$ in the case of a rule with an empty body), and \ruleset{} the set of all these rules;
\item 
 an argument 
$\argur{\asmset_1}{\sent_1}{\ruleset_1}$ 
{\em attacks} an argument
$\argur{\asmset_2}{\sent_2}{\ruleset_2}$  
iff
$\sent_1=\overline{\asm}$ for some
$\asm \in \asmset_2$.
\end{itemize}
%
%
Given an ABA framework \abaf, let $\Args$ be the set of all arguments
and $\Att$ be defined as above.
Then $(\Args,\Att)$ is an AA framework (that we refer to as \emph{the AA framework for \abaf}) and standard argumentative semantics 
for the latter can be used to determine, e.g.,  grounded 
extensions.\footnote{ABA semantics were originally defined in terms of sets of assumptions and attacks between them \cite{ABA}, but can be reformulated, for flat ABA frameworks, 
in terms of sets of arguments and attacks between them (see~\cite{ABAtutorial}), as given here. }

\paragraph{LPs as Flat ABA Frameworks.}
Flat ABA can be instantiated to capture logic programming under several semantics~\cite{ABA}, including the well-founded model~\cite{WFM} and the stable model semantics~\cite{SMS}.  

\begin{example}
\label{ex:LP-ABA}
The semantics of the LP
 $\{\rho_1, \rho_2, \rho_3\}$ from Example~\ref{ex:ProbLog}
can be captured by understanding it as a flat ABA framework  \abaf\ with
\begin{itemize}
    \item $\rules =\{\rho_1, \rho_2,\rho_3\}$;  
    \item $\lang=\{a,b,c,d,not\, a, not\, b, not\,c,not\,d\}$ (this is the Herbrand base of the LP together with all the negation as failure literals that can be constructed from its Herbrand base);
    \item $\asms=\{not\, a, not\, b, not\,c,not\,d\}$ (this is the set of all negation as failure literals in $\lang$);
    \item $\overline{not\,X}=X$ for all $X \in \{a,b,c,d\}$.
\end{itemize}
$(\Args,\Att)$ can then be obtained from this ABA framework, with:
\begin{itemize}
    \item $\Args=\{
    \argur{\{not\,c\}}{a}{\{\rho_1,\rho_2\}},\; \argur{\{\}}{b}{\{\rho_2\}}, \; \argur{\{not\, d\}}{d}{\{\rho_3\}}\} \cup 
    \{
    \argur{\{not \, X\}}{not \, X}{\{\}}| X \in \{a,b,c,d\}\}$
    \item 
        $\Att=  \{(\argur{\{not\,c\}}{a}{\{\rho_1,\rho_2\}},\argur{\{not \, a\}}{not \, a}{\{\}}),$ 
        \quad $(\argur{\{not\,d\}}{d}{\{\rho_3\}},\argur{\{not \, d\}}{not \, d}{\{\}}),$ \\ \hspace*{1.05cm} $(\argur{\{not\,d\}}{d}{\{\rho_3\}},\argur{\{not \, d\}}{d} {\{\rho_3\}}),$ 
        \quad  $ (\argur{\{\}}{b}{\{\rho_2\}},\argur{\{not \, b\}}{not \, b}{\{\}}) \}.$
    \end{itemize}
$\rules$, as a LP, admits no stable model~\cite{SMS}, and there is no stable extension~\cite{Dung95} of this ABA framework.
The well-founded model~\cite{WFM} of $\rules$ deems $a,b$ true, $c$ false and $d$ undecided, and 
the grounded extension ~\cite{Dung95} of \abaf\ accepts 

\hspace*{3cm} $\{\argur{\{not\,c\}}{a}{\{\rho_1,\rho_2\}},\; \argur{\{\}}{b}{\{\rho_2\}}, \; 
    \argur{\{not \, c\}}{not \, c}{\{\}}\}.$
    \\
The two correspond in the sense that the atomic claims of accepted arguments in the grounded extension are true in the well-founded model, the negation as failure claims of accepted arguments are the complement of false atoms in the well-founded model, and no other atoms are true or false therein (and vice versa, i.e. every true atom and the 
negation as failure of every false atom in the well-founded model are claims of accepted arguments in the grounded extension, and no other arguments exist therein). Our results will leverage on the correspondence illustrated here (see~\cite{ABA} for formal results, including correspondences between other semantics of LPs and notions of extensions in flat ABA~\cite{ABA}).
\end{example}

\paragraph{Probabilistic Abstract Argumentation (PAA).} 

The material in this section is adapted from \cite{PAA-PABA} (focusing on a single juror).
A \emph{PAA framework} basically consists of an AA framework 
and a probability distribution assigning probabilities to subframeworks. Formally, 
it is a triple ${(\AAF,\PS,\applies)}$ where:
\begin{itemize}
\item $ \AAF \!=\! (\Args, \Att)$ is an AA framework;

\item $\PS \!=\! (\Worlds,P)$ is a probability space, with $\Worlds$ the 
\emph{world set} and $P$ a \emph{probability distribution} over $\Worlds$;

\item $\applies \,\subseteq \Worlds \times \Args$ 
specifies  which arguments are {\em applicable} in  worlds in $\PS$,
where $\arg \in \Args$ is applicable in $w \in \Worlds$ iff $(w,\arg) \in \applies$ (we also write $w\applies \arg$ for $(w,\arg) \in \applies$). 
\end{itemize}
Given a possible world $w \in \Worlds$,
the {\em AAF wrt $w$}, denoted by $\AAF_{w} = (\Args_{w}, \Att_{w})$, is defined by restricting $\AAF$ to the arguments applicable in $w$. More formally,
$\Args_w =  \{ \arg \in \Args \,\,|\,\, w \applies \arg\,\}$ and $\Att_w = \Att\, \cap (\Args_w \times \Args_w).$
We denote the {grounded extension} 
of $\AAF_w$ by $\GE(\AAF_w)$.
Then, the {\em grounded 
probability of argument} 
$\arg \in \Args$ 
is defined as follows
:

$$Prob_{\XE}(\arg) = \sum \limits_{w \in \Worlds: \arg \in \XE(\AAF_w) } P(w).$$
Intuitively, 
$Prob_{\XE}(\arg)$ is the probability that $\arg$ 
is accepted in a possible world (under the grounded extension semantics).
PAA is a form of probabilistic argumentation under the constellation approach \cite{hunter2021probabilistic}.

\section{ProbLog-ABA and ProbLog-PAA}
\label{sec:newABA}
Several instances of (flat and non-flat) ABA have been studied \cite{ABA
,Quentin}
(including the flat logic programming instance illustrated in Example~\ref{ex:LP-ABA}~\cite{ABA}). Here, we introduce the following (flat) instance, referred to as 
\emph{ProbLog-ABA} since, as we will show, it is a first step towards capturing ProbLog programs as PAA.

\begin{definition}
\label{def:problogaba}
A \emph{ProbLog-ABA framework}
is a flat ABA framework \abaf\ where
\begin{itemize}
\item $\rules$ is a (ground)\footnote{This is in line with standard practices when studying the semantics of LPs: they are assumed to stand for the set of all their ground instances over their Herbrand universe.}  LP;
    \item $\lang$ is a set of atoms and negation as failure literals; formally, for $HB$ the Herbrand Base of $\rules$ and $HB^{not}=\{not\,p | p \in HB\}$,
    $\lang=HB \cup HB^{not}\cup\{\some\}$, for a special symbol $\some\not\in HB \cup HB^{not}$;
    \item $\asms= HB^{not} \cup \facts $, for some $\facts \subseteq HB$;
    \item for any $not\,p \in \asms$, $\overline{not\,p}=p$; for any $\phi \in \facts$, $\overline{\phi}=\some$.
\end{itemize}
\end{definition}
Basically, ProbLog-ABA frameworks generalise the ABA frameworks capturing LPs by including additional assumptions ($\facts$), all having the same contrary ($\some$) not occurring in the LP ($\rules$). 
These additional assumptions can be seen as playing the role of abducibles in abductive logic programming~\cite{abduction}.
They amount to the atoms in probabilistic facts in ProbLog programs $T=R \cup F$, as follows.\footnote{ Given that we are focusing on semantics, in the remainder we assume that the ProbLog program is ground.}  

\begin{definition}
The \emph{ProbLog-ABA framework corresponding to $T$} is $\abaf$ as in Definition~\ref{def:problogaba} where $\rules=R$ and $\facts=\{l |(p :: l\leftarrow) \in F\}$.
\end{definition}

We can then instantiate  PAA  with ProbLog programs, so that the AA framework component is drawn from the ProbLog-ABA framework corresponding to a given ProbLog program
and the probability space is defined to mirror the probability distribution over LPs captured by the ProbLog program, as follows.

\begin{definition}
\label{ProbLog-PAA}
Let
$\abafT$ be the ProbLog-ABA framework corresponding to $T$, for $\asms^T=HB^{not} \cup \facts$, and let $\AAF^T=(\Args^T,\Att^T)$ be the AA framework for $\abafT$.
Then, the \emph{PAA framework corresponding to $T$} is $(\AAF^T, \PS^T,  \applies^T)$ where $\PS^T=(\Worlds^T,P)$ is the probability space with

 $\bullet$ $\Worlds^T=2^{\facts}$,
 
    $\bullet$ 
    for $w \in \Worlds^T$:
    $ P(w)= \prod_{l \in w, p::l\leftarrow \in F} p \cdot \prod_{l \in \facts \setminus w, p::l\leftarrow \in F} (1-p),$
\\
and $\applies^T$ is such that, for $w \in \Worlds^T$, $\arg \in \Args^T$, $\arg=\argur{\asmset}{\sent}{\ruleset}$:
\quad 
$w \applies^T \arg \mbox{ iff} \asmset \cap \facts \subseteq w$.
\end{definition}
In line with \cite{PAA-PABA}, we assume that 
the AA frameworks in PAA frameworks corresponding to ProbLog programs are finite.
Given 
such a PAA framework, we can measure the grounded probability of arguments (as in standard PAA), as well as the  \emph{grounded probability of queries}, as follows.

\begin{definition}
\label{def:grounded prob of queries}
Let $T$ be a (ground) ProbLog program and 
$(\AAF^T, \PS^T,  \applies^T)$ be the PAA framework corresponding to $T$, for $\AAF^T=(\Args^T,\Att^T)$ and $\PS^T=(\Worlds^T,P)$. 
Then,  
the \emph{grounded probability of (ground) query} $q$ is   
$$Prob_{\GE}(q)=
 \sum_{w \in \Worlds^T: \arg \in \GE(\AAF^T_w), \arg=\argur{\asmset}{q}{\ruleset}} P(w).
 $$
\end{definition}
This notion is more fine-grained than 
that 
of grounded probability of arguments, reflecting the structured nature of our PAA framework
.
We conjecture (and leave to future work) that this notion is an instance of 
that of grounded probability of sentences 
in the Probabilistic ABA of \cite{PAA-PABA}
.

\section{Results}
\label{sec:results}
We show that the semantics of ProbLog can be captured in probabilistic argumentation, as follows.

\begin{proposition}
Let $T$ be a (ground) ProbLog program and 
$(\AAF^T, \PS^T,  \applies^T)$ be the PAA framework corresponding to $T$, for $\AAF^T=(\Args^T,\Att^T)$ and $\PS^T=(\Worlds^T,P)$. 
Then,  for any (ground) query $q$:  

\begin{eqnarray}
 && P_s(q|T)=Prob_{\GE}(q);
 \\
 && P_s(q|T) \leq 
\sum_{\arg \in \Args^T:
 \arg=\argur{\asmset}{q}{\ruleset}} Prob_\GE(\arg).
 \end{eqnarray}
\end{proposition}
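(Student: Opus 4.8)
The plan is to prove~(1) by matching, world by world, the ProbLog distribution semantics with the grounded extensions of the restricted AA frameworks $\AAF^T_w$, and then to derive~(2) from~(1) by a short counting argument. First I would fix the identification between possible worlds and logic programs: each $w \in \Worlds^T = 2^{\facts}$ is identified with the LP $L_w = R \cup \{\, l \leftarrow \mid l \in w\,\}$; since $\facts \subseteq HB$, the Herbrand base of $L_w$ is exactly $HB$, and by construction $P(w)$ equals the ProbLog weight $P(L_w \mid T)$. Conversely, every $L = F' \cup R$ occurring in the definition of $P_s(q \mid T)$ equals $L_w$ for $w = \{\, l \mid (l \leftarrow) \in F'\,\}$. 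Hence the sum defining $P_s(q \mid T)$ ranges exactly over $\Worlds^T$ with weights $P(w)$, and it remains to characterise, for each $w$, when $q$ is true in the well-founded model of $L_w$ (which, for positive $L_w$, coincides with its least Herbrand model, so the two ProbLog semantics are handled uniformly).

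The core step is to show that $\AAF^T_w$ is isomorphic, as an AA framework, to the AA framework $\AAF_{L_w}$ for the flat ABA framework capturing $L_w$ in the style of Example~\ref{ex:LP-ABA} (rules $L_w$, assumptions $HB^{not}$, $\overline{not\,p} = p$). The isomorphism acts on argument trees: it replaces every leaf of an $\AAF^T_w$-argument labelled by a fact-assumption $\phi \in \facts$ --- which, by the applicability condition $\asmset \cap \facts \subseteq w$, forces $\phi \in w$ --- by the one-step subtree deriving $\phi$ from the fact $\phi \leftarrow$ of $L_w$, and its inverse undoes this. This is well defined and bijective because an atom $\phi \in \facts$ heads no rule of $R$, so in an $\AAF^T_w$-argument it can occur only as a leaf and in an $\AAF_{L_w}$-argument only via the single fact $\phi \leftarrow$; the map preserves the claim and the negation-as-failure part of the support. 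It also preserves attacks: in any ProbLog-ABA framework the contrary of a fact-assumption is the fresh symbol $\some$, which heads no rule and is no assumption, so no argument has claim $\some$ and every attack must be directed at a negation-as-failure assumption --- exactly as in $\AAF_{L_w}$, where attacks are ``claim $p$ against assumption $not\,p$''. Since an AA-framework isomorphism maps the grounded extension onto the grounded extension and preserves claims, $\GE(\AAF^T_w)$ contains an argument with claim $q$ iff the grounded extension of $\AAF_{L_w}$ does. By the LP/flat-ABA correspondence of \cite{ABA} recalled in Example~\ref{ex:LP-ABA} (which also covers the three-valued well-founded model), this holds iff $q$ is true in the well-founded model of $L_w$. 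Summing over worlds then yields $P_s(q \mid T) = \sum_{w} P(w) = Prob_{\GE}(q)$, where the middle sum is over those worlds $w$ for which $\GE(\AAF^T_w)$ contains an argument with claim $q$; this is~(1).

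For~(2), I would expand the right-hand side and swap the two summations, obtaining $\sum_{\arg \in \Args^T : \arg = \argur{\asmset}{q}{\ruleset}} Prob_{\GE}(\arg) = \sum_{w \in \Worlds^T} P(w)\, N_q(w)$, where $N_q(w)$ is the number of arguments $\arg \in \Args^T$ with claim $q$ lying in $\GE(\AAF^T_w)$. Each $N_q(w)$ is a nonnegative integer, equal to $0$ unless $\GE(\AAF^T_w)$ contains an argument with claim $q$, in which case it is $\geq 1$; hence the sum is $\geq \sum_{w} P(w) = Prob_{\GE}(q) = P_s(q \mid T)$, the middle sum again being over those worlds $w$ for which $\GE(\AAF^T_w)$ contains an argument with claim $q$, and the last equality being~(1). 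This is~(2).

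The step I expect to be the main obstacle is the argument-tree isomorphism: one must check carefully that ``replace a fact-leaf by the corresponding fact-rule application'' really is a bijection between the arguments of $\AAF^T_w$ and those of $\AAF_{L_w}$ --- which hinges on atoms of $\facts$ not being derivable in any other way, as they head no rule of $R$ --- and that restricting $\AAF^T$ to a world via $\applies^T$ has exactly the effect of deleting the absent facts from $L_w$. The remaining subtlety is to invoke the cited LP/ABA result in the precise form ``an atom is true in the well-founded model iff it is the claim of an argument in the grounded extension'', which is the formulation recalled in Example~\ref{ex:LP-ABA}.
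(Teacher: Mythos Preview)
Your proposal is correct and follows essentially the same route as the paper's (sketch) proof: identify worlds with sub-programs, invoke the well-founded/grounded correspondence from \cite{ABA}, and derive~(2) from~(1) by swapping the two sums and bounding the count of accepted arguments with claim $q$ by~$1$. Your explicit argument-tree isomorphism between $\AAF^T_w$ and the AA framework for $L_w$ (replacing fact-assumption leaves by fact-rule applications, using that atoms in $\facts$ head no rule of $R$ and that $\some$ is never a claim) carefully fills in the one step the paper leaves implicit.
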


\begin{proof}
 (Sketch) $P_s(q|T)=\sum_{L=F'\cup R,  
F'\subseteq \{l\leftarrow | p::l\leftarrow \in  F\},L\models q} P(L|T)$ by definition. Given the correspondence between well-founded model of a LP and grounded extension of the AA framework for the ABA framework corresponding to the LP \cite{ABA}, and by definition of the probability space in the PAA framework corresponding to $T$, we obtain
$P_s(q|T)=\sum_{w \in \Worlds^T: \arg \in \GE(\AAF^T_w), \arg=\argur{\asmset}{q}{\ruleset}} P(w)$, proving  
(1). 
(1) implies (2) because
$\sum_{\arg \in \Args^T:
 \arg=\argur{\asmset}{q}{\ruleset}} Prob_\GE(\arg)
 = \sum_{\arg \in \Args^T:
 \arg=\argur{\asmset}{q}{\ruleset}}
 \sum_{w \in \Worlds: \arg \in \XE(\AAF_w) } P(w) 
 =  \sum_{w \in \Worlds} P(w) 
 \cdot |\{ \arg \in \Args^T \mid
 \arg=\argur{\asmset}{q}{\ruleset}\}|
 \geq \sum_{w \in \Worlds^T: \arg \in \GE(\AAF^T_w), \arg=\argur{\asmset}{q}{\ruleset}} P(w)
 $
(there may be multiple arguments for the same claim in the grounded extensions obtained by different choices of worlds).  
\end{proof}
Thus, the notion of  grounded probability of queries in the ProbLog-ABA instance of PAA 
corresponds exactly  to the notion of success probability in ProbLog, whereas grounded probability of arguments
in PAA 
approximates it.


\section{Conclusion and Discussion}
\label{sec:concl}

We have defined a form of 
PAA,
instantiated with a novel form of ABA and probability spaces mirroring ProbLog's probability distributions over logic programs, to re-interpret ProbLog in argumentative terms.  
This allows us to broaden the semantics of ProbLog beyond (two-valued) well-founded models, leveraging on other semantics for ABA, notably semantics of \emph{sceptically preferred} and \emph{ideal extensions} (see \cite{ABAhandbook}
). 
Moreover, it opens the way to different forms of explainability for the outputs of ProbLog (under the standard or new semantics). Indeed, argumentative abstractions of various reasoning problems have been shown to lend themselves to 
diverse explanatory formats, including interactive ones~\cite{argXAI}, and it is well known that   
different forms of explanations are 
needed
to deal with different cognitive needs 
\cite{IBM}
.

Future work includes exploring whether reasoning under the new semantics can be efficiently implemented in practice, and whether the new forms of explanations can be beneficially deployed in applications to increase user trust. 
Also, we have focused on success probability of queries: it would also be interesting to study  whether and how
 explanation probability of queries can be captured argumentatively
 .
 It would also be interesting to explore whether 
ProbLog can be captured directly in Probabilistic ABA  
of the more general form proposed in \cite{PAA-PABA} (again focusing on a single juror)
.
It would also be interesting to explore whether credulous versions of ProbLog, notably \textsc{sm}ProbLog~\cite{SMProbLog}, could inform the definition of novel  credulous semantics in probabilistic argumentation. 
Finally,
it would be interesting to study whether other  languages/tools based on the distribution semantics~\cite{distribution} could also be captured 
in PAA.

\vspace*{-0.2cm} 
\section*{Acknowledgements} 
\vspace*{-0.2cm} 
Toni and Potyka were supported by the 
ERC under
the 
EU’s Horizon 2020 research and innovation programme (grant
No. 101020934). Toni was also supported
by 
J.P. Morgan and  the
UK RAEng under the Research Chairs
and Senior Research Fellowships scheme.
Ulbricht was supported by the German Federal Ministry of Education and Research (
01/S18026A-F) by funding 
Big Data and AI ``ScaDS.AI'' Dresden/Leipzig.
Totis was supported by the FWO project N. G066818N and the Flanders AI program. 

\bibliographystyle{eptcs}
\bibliography{bib}

\begin{thebibliography}{10}
\providecommand{\bibitemdeclare}[2]{}
\providecommand{\surnamestart}{}
\providecommand{\surnameend}{}
\providecommand{\urlprefix}{Available at }
\providecommand{\url}[1]{\texttt{#1}}
\providecommand{\href}[2]{\texttt{#2}}
\providecommand{\urlalt}[2]{\href{#1}{#2}}
\providecommand{\doi}[1]{doi:\urlalt{https://doi.org/#1}{#1}}
\providecommand{\eprint}[1]{arXiv:\urlalt{https://arxiv.org/abs/#1}{#1}}
\providecommand{\bibinfo}[2]{#2}

\bibitemdeclare{article}{ObjTraciking}
\bibitem{ObjTraciking}
\bibinfo{author}{Laura \surnamestart Antanas\surnameend},
  \bibinfo{author}{Plinio \surnamestart Moreno\surnameend},
  \bibinfo{author}{Marion \surnamestart Neumann\surnameend},
  \bibinfo{author}{Rui~Pimentel \surnamestart de~Figueiredo\surnameend},
  \bibinfo{author}{Kristian \surnamestart Kersting\surnameend},
  \bibinfo{author}{Jos{\'{e}} \surnamestart Santos{-}Victor\surnameend} \&
  \bibinfo{author}{Luc~De \surnamestart Raedt\surnameend}
  (\bibinfo{year}{2019}): \emph{\bibinfo{title}{Semantic and geometric
  reasoning for robotic grasping: a probabilistic logic approach}}.
\newblock {\slshape \bibinfo{journal}{Auton. Robots}}
  \bibinfo{volume}{43}(\bibinfo{number}{6}), pp. \bibinfo{pages}{1393--1418},
  \doi{10.1007/s10514-018-9784-8}.

\bibitemdeclare{article}{IBM}
\bibitem{IBM}
\bibinfo{author}{Vijay \surnamestart Arya\surnameend}, \bibinfo{author}{Rachel
  K.~E. \surnamestart Bellamy\surnameend}, \bibinfo{author}{Pin{-}Yu
  \surnamestart Chen\surnameend}, \bibinfo{author}{Amit \surnamestart
  Dhurandhar\surnameend}, \bibinfo{author}{Michael \surnamestart
  Hind\surnameend}, \bibinfo{author}{Samuel~C. \surnamestart
  Hoffman\surnameend}, \bibinfo{author}{Stephanie \surnamestart
  Houde\surnameend}, \bibinfo{author}{Q.~Vera \surnamestart Liao\surnameend},
  \bibinfo{author}{Ronny \surnamestart Luss\surnameend},
  \bibinfo{author}{Aleksandra \surnamestart Mojsilovic\surnameend},
  \bibinfo{author}{Sami \surnamestart Mourad\surnameend},
  \bibinfo{author}{Pablo \surnamestart Pedemonte\surnameend},
  \bibinfo{author}{Ramya \surnamestart Raghavendra\surnameend},
  \bibinfo{author}{John~T. \surnamestart Richards\surnameend},
  \bibinfo{author}{Prasanna \surnamestart Sattigeri\surnameend},
  \bibinfo{author}{Karthikeyan \surnamestart Shanmugam\surnameend},
  \bibinfo{author}{Moninder \surnamestart Singh\surnameend},
  \bibinfo{author}{Kush~R. \surnamestart Varshney\surnameend},
  \bibinfo{author}{Dennis \surnamestart Wei\surnameend} \&
  \bibinfo{author}{Yunfeng \surnamestart Zhang\surnameend}
  (\bibinfo{year}{2020}): \emph{\bibinfo{title}{{AI} Explainability 360: An
  Extensible Toolkit for Understanding Data and Machine Learning Models}}.
\newblock {\slshape \bibinfo{journal}{J. Mach. Learn. Res.}}
  \bibinfo{volume}{21}, pp. \bibinfo{pages}{130:1--130:6}.
\newblock \urlprefix\url{http://jmlr.org/papers/v21/19-1035.html}.

\bibitemdeclare{article}{ABA}
\bibitem{ABA}
\bibinfo{author}{Andrei \surnamestart Bondarenko\surnameend},
  \bibinfo{author}{Phan~Minh \surnamestart Dung\surnameend},
  \bibinfo{author}{Robert~A. \surnamestart Kowalski\surnameend} \&
  \bibinfo{author}{Francesca \surnamestart Toni\surnameend}
  (\bibinfo{year}{1997}): \emph{\bibinfo{title}{An Abstract,
  Argumentation-Theoretic Approach to Default Reasoning}}.
\newblock {\slshape \bibinfo{journal}{Artif. Intell.}} \bibinfo{volume}{93},
  pp. \bibinfo{pages}{63--101}, \doi{10.1016/S0004-3702(97)00015-5}.

\bibitemdeclare{article}{ABAhandbook}
\bibitem{ABAhandbook}
\bibinfo{author}{Kristijonas \surnamestart Cyras\surnameend},
  \bibinfo{author}{Xiuyi \surnamestart Fan\surnameend},
  \bibinfo{author}{Claudia \surnamestart Schulz\surnameend} \&
  \bibinfo{author}{Francesca \surnamestart Toni\surnameend}
  (\bibinfo{year}{2017}): \emph{\bibinfo{title}{Assumption-based Argumentation:
  Disputes, Explanations, Preferences}}.
\newblock {\slshape \bibinfo{journal}{{FLAP}}}
  \bibinfo{volume}{4}(\bibinfo{number}{8}).
\newblock
  \urlprefix\url{http://www.collegepublications.co.uk/downloads/ifcolog00017.pdf}.

\bibitemdeclare{article}{Quentin}
\bibitem{Quentin}
\bibinfo{author}{Kristijonas \surnamestart Cyras\surnameend},
  \bibinfo{author}{Quentin \surnamestart Heinrich\surnameend} \&
  \bibinfo{author}{Francesca \surnamestart Toni\surnameend}
  (\bibinfo{year}{2021}): \emph{\bibinfo{title}{Computational complexity of
  flat and generic Assumption-Based Argumentation, with and without
  probabilities}}.
\newblock {\slshape \bibinfo{journal}{Artif. Intell.}} \bibinfo{volume}{293},
  \doi{10.1016/j.artint.2020.103449}.

\bibitemdeclare{inproceedings}{argXAI}
\bibitem{argXAI}
\bibinfo{author}{Kristijonas \surnamestart Cyras\surnameend},
  \bibinfo{author}{Antonio \surnamestart Rago\surnameend},
  \bibinfo{author}{Emanuele \surnamestart Albini\surnameend},
  \bibinfo{author}{Pietro \surnamestart Baroni\surnameend} \&
  \bibinfo{author}{Francesca \surnamestart Toni\surnameend}
  (\bibinfo{year}{2021}): \emph{\bibinfo{title}{Argumentative {XAI:} {A}
  Survey}}.
\newblock In: {\slshape \bibinfo{booktitle}{Proceedings of the Thirtieth
  International Joint Conference on Artificial Intelligence, {IJCAI} 2021}},
  pp. \bibinfo{pages}{4392--4399}, \doi{10.24963/ijcai.2021/600}.

\bibitemdeclare{inproceedings}{synth}
\bibitem{synth}
\bibinfo{author}{Yann \surnamestart Dauxais\surnameend},
  \bibinfo{author}{Cl{\'{e}}ment \surnamestart Gautrais\surnameend},
  \bibinfo{author}{Anton \surnamestart Dries\surnameend},
  \bibinfo{author}{Arcchit \surnamestart Jain\surnameend},
  \bibinfo{author}{Samuel \surnamestart Kolb\surnameend},
  \bibinfo{author}{Mohit \surnamestart Kumar\surnameend},
  \bibinfo{author}{Stefano \surnamestart Teso\surnameend},
  \bibinfo{author}{Elia~Van \surnamestart Wolputte\surnameend},
  \bibinfo{author}{Gust \surnamestart Verbruggen\surnameend} \&
  \bibinfo{author}{Luc~De \surnamestart Raedt\surnameend}
  (\bibinfo{year}{2019}): \emph{\bibinfo{title}{SynthLog: {A} Language for
  Synthesising Inductive Data Models (Extended Abstract)}}.
\newblock In: {\slshape \bibinfo{booktitle}{Machine Learning and Knowledge
  Discovery in Databases - International Workshops of {ECML} {PKDD} 2019, Part
  {I}}}, \doi{10.1007/978-3-030-43823-4\_9}.

\bibitemdeclare{article}{Dung95}
\bibitem{Dung95}
\bibinfo{author}{Phan~Minh \surnamestart Dung\surnameend}
  (\bibinfo{year}{1995}): \emph{\bibinfo{title}{{On the Acceptability of
  Arguments and its Fundamental Role in Nonmonotonic Reasoning, Logic
  Programming and n-Person Games}}}.
\newblock {\slshape \bibinfo{journal}{Artif. Intell.}}
  \bibinfo{volume}{77}(\bibinfo{number}{2}), pp. \bibinfo{pages}{321--358},
  \doi{10.1016/0004-3702(94)00041-X}.

\bibitemdeclare{inproceedings}{PAA-PABA}
\bibitem{PAA-PABA}
\bibinfo{author}{Phan~Minh \surnamestart Dung\surnameend} \&
  \bibinfo{author}{Phan~Minh \surnamestart Thang\surnameend}
  (\bibinfo{year}{2010}): \emph{\bibinfo{title}{Towards (Probabilistic)
  Argumentation for Jury-based Dispute Resolution}}.
\newblock In: {\slshape \bibinfo{booktitle}{Computational Models of Argument:
  Proceedings of {COMMA} 2010}}, {\slshape \bibinfo{series}{Frontiers in
  Artificial Intelligence and Applications}} \bibinfo{volume}{216},
  \bibinfo{publisher}{{IOS} Press}, pp. \bibinfo{pages}{171--182},
  \doi{10.3233/978-1-60750-619-5-171}.

\bibitemdeclare{article}{ProbLog}
\bibitem{ProbLog}
\bibinfo{author}{Daan \surnamestart Fierens\surnameend},
  \bibinfo{author}{Guy~Van \surnamestart den Broeck\surnameend},
  \bibinfo{author}{Joris \surnamestart Renkens\surnameend},
  \bibinfo{author}{Dimitar~Sht. \surnamestart Shterionov\surnameend},
  \bibinfo{author}{Bernd \surnamestart Gutmann\surnameend},
  \bibinfo{author}{Ingo \surnamestart Thon\surnameend}, \bibinfo{author}{Gerda
  \surnamestart Janssens\surnameend} \& \bibinfo{author}{Luc~De \surnamestart
  Raedt\surnameend} (\bibinfo{year}{2015}): \emph{\bibinfo{title}{Inference and
  learning in probabilistic logic programs using weighted Boolean formulas}}.
\newblock {\slshape \bibinfo{journal}{Theory Pract. Log. Program.}}
  \bibinfo{volume}{15}(\bibinfo{number}{3}), pp. \bibinfo{pages}{358--401},
  \doi{10.1017/S1471068414000076}.

\bibitemdeclare{article}{WFM}
\bibitem{WFM}
\bibinfo{author}{Allen~Van \surnamestart Gelder\surnameend},
  \bibinfo{author}{Kenneth~A. \surnamestart Ross\surnameend} \&
  \bibinfo{author}{John~S. \surnamestart Schlipf\surnameend}
  (\bibinfo{year}{1991}): \emph{\bibinfo{title}{The Well-Founded Semantics for
  General Logic Programs}}.
\newblock {\slshape \bibinfo{journal}{J. {ACM}}}
  \bibinfo{volume}{38}(\bibinfo{number}{3}), pp. \bibinfo{pages}{620--650},
  \doi{10.1145/116825.116838}.

\bibitemdeclare{inproceedings}{SMS}
\bibitem{SMS}
\bibinfo{author}{Michael \surnamestart Gelfond\surnameend} \&
  \bibinfo{author}{Vladimir \surnamestart Lifschitz\surnameend}
  (\bibinfo{year}{1988}): \emph{\bibinfo{title}{The Stable Model Semantics for
  Logic Programming}}.
\newblock In: {\slshape \bibinfo{booktitle}{Logic Programming, Proceedings of
  the Fifth International Conference and Symposium,}}, pp.
  \bibinfo{pages}{1070--1080}.

\bibitemdeclare{inproceedings}{prism3val}
\bibitem{prism3val}
\bibinfo{author}{Spyros \surnamestart Hadjichristodoulou\surnameend} \&
  \bibinfo{author}{David~Scott \surnamestart Warren\surnameend}
  (\bibinfo{year}{2012}): \emph{\bibinfo{title}{Probabilistic Logic Programming
  with Well-Founded Negation}}.
\newblock In \bibinfo{editor}{D.~Michael \surnamestart Miller\surnameend} \&
  \bibinfo{editor}{Vincent~C. \surnamestart Gaudet\surnameend}, editors:
  {\slshape \bibinfo{booktitle}{42nd {IEEE} International Symposium on
  Multiple-Valued Logic, {ISMVL} 2012, Victoria, BC, Canada, May 14-16, 2012}},
  \bibinfo{publisher}{{IEEE} Computer Society}, pp. \bibinfo{pages}{232--237},
  \doi{10.1109/ISMVL.2012.26}.

\bibitemdeclare{article}{hunter2021probabilistic}
\bibitem{hunter2021probabilistic}
\bibinfo{author}{Anthony \surnamestart Hunter\surnameend},
  \bibinfo{author}{Sylwia \surnamestart Polberg\surnameend},
  \bibinfo{author}{Nico \surnamestart Potyka\surnameend},
  \bibinfo{author}{Tjitze \surnamestart Rienstra\surnameend} \&
  \bibinfo{author}{Matthias \surnamestart Thimm\surnameend}
  (\bibinfo{year}{2021}): \emph{\bibinfo{title}{Probabilistic argumentation: A
  survey}}.
\newblock {\slshape \bibinfo{journal}{Handbook of Formal Argumentation}}
  \bibinfo{volume}{2}, pp. \bibinfo{pages}{397--441}.

\bibitemdeclare{incollection}{abduction}
\bibitem{abduction}
\bibinfo{author}{Antonis~C. \surnamestart Kakas\surnameend},
  \bibinfo{author}{Robert~A. \surnamestart Kowalski\surnameend} \&
  \bibinfo{author}{Francesca \surnamestart Toni\surnameend}
  (\bibinfo{year}{1998}): \emph{\bibinfo{title}{The role of abduction in logic
  programming}}.
\newblock In: {\slshape \bibinfo{booktitle}{Handbook of logic in artificial
  intelligence and logic programming}}, pp. \bibinfo{pages}{35--324}.

\bibitemdeclare{incollection}{MetaNetworks}
\bibitem{MetaNetworks}
\bibinfo{author}{Angelika \surnamestart Kimmig\surnameend} \&
  \bibinfo{author}{Fabrizio \surnamestart Costa\surnameend}
  (\bibinfo{year}{2012}): \emph{\bibinfo{title}{Link and Node Prediction in
  Metabolic Networks with Probabilistic Logic}}.
\newblock In: {\slshape \bibinfo{booktitle}{Bisociative Knowledge Discovery -
  An Introduction to Concept, Algorithms, Tools, and Applications}}, pp.
  \bibinfo{pages}{407--426}, \doi{10.1007/978-3-642-31830-6\_29}.

\bibitemdeclare{inproceedings}{nir}
\bibitem{nir}
\bibinfo{author}{Hengfei \surnamestart Li\surnameend}, \bibinfo{author}{Nir
  \surnamestart Oren\surnameend} \& \bibinfo{author}{Timothy~J. \surnamestart
  Norman\surnameend} (\bibinfo{year}{2011}):
  \emph{\bibinfo{title}{Probabilistic Argumentation Frameworks}}.
\newblock In: {\slshape \bibinfo{booktitle}{Theories and Applications of Formal
  Argumentation - First International Workshop, {TAFA} 2011. Revised Selected
  Papers}}, {\slshape \bibinfo{series}{Lecture Notes in Computer Science}}
  \bibinfo{volume}{7132}, \bibinfo{publisher}{Springer}, pp.
  \bibinfo{pages}{1--16}, \doi{10.1007/978-3-642-29184-5\_1}.

\bibitemdeclare{inproceedings}{deepProbLog18}
\bibitem{deepProbLog18}
\bibinfo{author}{Robin \surnamestart Manhaeve\surnameend},
  \bibinfo{author}{Sebastijan \surnamestart Dumancic\surnameend},
  \bibinfo{author}{Angelika \surnamestart Kimmig\surnameend},
  \bibinfo{author}{Thomas \surnamestart Demeester\surnameend} \&
  \bibinfo{author}{Luc~De \surnamestart Raedt\surnameend}
  (\bibinfo{year}{2018}): \emph{\bibinfo{title}{DeepProbLog: Neural
  Probabilistic Logic Programming}}.
\newblock In: {\slshape \bibinfo{booktitle}{Advances in Neural Information
  Processing Systems 31: NeurIPS 2018,}}, pp. \bibinfo{pages}{3753--3763}.
\newblock
  \urlprefix\url{https://proceedings.neurips.cc/paper/2018/hash/dc5d637ed5e62c36ecb73b654b05ba2a-Abstract.html}.

\bibitemdeclare{article}{Bistarelli20}
\bibitem{Bistarelli20}
\bibinfo{author}{Theofrastos \surnamestart Mantadelis\surnameend} \&
  \bibinfo{author}{Stefano \surnamestart Bistarelli\surnameend}
  (\bibinfo{year}{2020}): \emph{\bibinfo{title}{Probabilistic abstract
  argumentation frameworks, a possible world view}}.
\newblock {\slshape \bibinfo{journal}{Int. J. Approx. Reason.}}
  \bibinfo{volume}{119}, pp. \bibinfo{pages}{204--219},
  \doi{10.1016/j.ijar.2019.12.006}.

\bibitemdeclare{article}{ProbLog15}
\bibitem{ProbLog15}
\bibinfo{author}{Luc~De \surnamestart Raedt\surnameend} \&
  \bibinfo{author}{Angelika \surnamestart Kimmig\surnameend}
  (\bibinfo{year}{2015}): \emph{\bibinfo{title}{Probabilistic (logic)
  programming concepts}}.
\newblock {\slshape \bibinfo{journal}{Mach. Learn.}}
  \bibinfo{volume}{100}(\bibinfo{number}{1}), pp. \bibinfo{pages}{5--47},
  \doi{10.1007/s10994-015-5494-z}.

\bibitemdeclare{inproceedings}{distribution}
\bibitem{distribution}
\bibinfo{author}{Taisuke \surnamestart Sato\surnameend} (\bibinfo{year}{1995}):
  \emph{\bibinfo{title}{A Statistical Learning Method for Logic Programs with
  Distribution Semantics}}.
\newblock In: {\slshape \bibinfo{booktitle}{Proceedings of the Twelfth
  International Conference on Logic Programming}}, pp.
  \bibinfo{pages}{715--729}.

\bibitemdeclare{article}{ABAtutorial}
\bibitem{ABAtutorial}
\bibinfo{author}{Francesca \surnamestart Toni\surnameend}
  (\bibinfo{year}{2014}): \emph{\bibinfo{title}{A tutorial on Assumption-based
  Argumentation}}.
\newblock {\slshape \bibinfo{journal}{Arg. \& Comp}}
  \bibinfo{volume}{5}(\bibinfo{number}{1}), pp. \bibinfo{pages}{89--117}.

\bibitemdeclare{article}{SMProbLog}
\bibitem{SMProbLog}
\bibinfo{author}{Pietro \surnamestart Totis\surnameend},
  \bibinfo{author}{Angelika \surnamestart Kimmig\surnameend} \&
  \bibinfo{author}{Luc~De \surnamestart Raedt\surnameend}
  (\bibinfo{year}{2023}): \emph{\bibinfo{title}{smProbLog: Stable Model
  Semantics in ProbLog for Probabilistic Argumentation}}.
\newblock {\slshape \bibinfo{journal}{Theory and Practice of Logic
  Programming}}, p. \bibinfo{pages}{1–50}, \doi{10.1017/S147106842300008X}.

\end{thebibliography}
\end{document}